\documentclass[conference]{IEEEtran}
\IEEEoverridecommandlockouts
\usepackage{cite}
\usepackage{amsmath,amssymb,amsfonts}
\usepackage{mathptmx}
\usepackage{amsthm}
\usepackage{algorithmic}
\usepackage{algorithm}
\usepackage{graphicx}
\usepackage{textcomp}
\usepackage{xcolor}
\usepackage{booktabs}
\IfFileExists{multirow.sty}{\usepackage{multirow}}{}
\providecommand{\multirow}[3]{#3}
\usepackage{url}
\usepackage{bm}
\usepackage{tikz}
\usetikzlibrary{arrows.meta,positioning,fit,backgrounds,calc,shapes.geometric,
                shapes.symbols,decorations.pathmorphing,decorations.pathreplacing,
                patterns}
\IfFileExists{fontawesome5.sty}{\usepackage{fontawesome5}}{}

\definecolor{figink}{HTML}{1A1A1A}   
\definecolor{figmute}{HTML}{8A8A8A}  
\definecolor{figrule}{HTML}{D4D4D4}  
\definecolor{figfill}{HTML}{F4F4F2}  
\definecolor{figacc}{HTML}{1B6B57}   
\definecolor{figaccbg}{HTML}{E6EFEB}
\definecolor{figalt}{HTML}{A8322B}   
\definecolor{figaltbg}{HTML}{F6E9E7}

\tikzset{
  figbase/.style={
    font=\footnotesize,
    >={Stealth[length=4.6pt,width=3.4pt]},
    every node/.append style={text=figink},
  },
  fighead/.style={font=\scriptsize\bfseries, text=figink},
  figlabel/.style={font=\scriptsize, text=figink},
  fignote/.style={font=\scriptsize, text=figmute},
  figbox/.style={rounded corners=1.5pt, draw=figrule, line width=0.5pt,
                 fill=figfill, align=center, inner sep=5pt},
  figplain/.style={figbox, draw=none},
  figaccbox/.style={figbox, fill=figaccbg, draw=figacc!35},
  figaltbox/.style={figbox, fill=figaltbg, draw=figalt!30},
  figflow/.style={->, draw=figmute, line width=0.6pt},
  figthin/.style={->, draw=figrule!60!figmute, line width=0.45pt},
  figedge/.style={draw=figrule, line width=0.5pt},
}

\let\origincludegraphics\includegraphics
\newsavebox{\figframebox}
\newcommand{\figframewidth}{0.4pt}
\renewcommand{\includegraphics}[2][]{%
  \sbox\figframebox{\origincludegraphics[#1]{#2}}%
  \begin{tikzpicture}[baseline=(figbb.south)]
    \node[inner sep=0pt,outer sep=0pt] (figbb) {\usebox\figframebox};
    \useasboundingbox (figbb.south west) rectangle (figbb.north east);
    \draw[figrule,line width=\figframewidth]
      (figbb.south west) rectangle (figbb.north east);
  \end{tikzpicture}}

\newtheorem{theorem}{Theorem}

\newtheorem{proposition}{Proposition}
\newtheorem{corollary}{Corollary}

\newtheorem{assumption}{Assumption}
\newtheorem{remark}{Remark}

\def\BibTeX{{\rm B\kern-.05em{\sc i\kern-.025em b}\kern-.08em
    T\kern-.1667em\lower.7ex\hbox{E}\kern-.125emX}}

\def\IEEEbibitemsep{0pt plus 0.3pt}
\makeatletter
\IEEEtriggercmd{\reset@font\scriptsize}
\makeatother
\IEEEtriggeratref{1}

\makeatletter
\def\@IEEEtabletopskip{0pt}
\renewcommand{\tablename}{TABLE}
\makeatother
\usepackage{caption}
\newcommand{\snorm}{s_{\mathrm{norm}}}
\newcommand{\sbase}{s_{\mathrm{base}}}
\newcommand{\pmax}{p_{\max}}

\newcommand{\covstep}{\mathrm{Cov}_{\mathrm{step}}}
\newcommand{\covsim}{\mathrm{Cov}_{\mathrm{traj}}}
\newcommand{\Cset}{C_\alpha}
\newcommand{\Cdep}{\bar C_\alpha}

\begin{document}
\title{ENCP: Episode-Normalized Conformal Prediction for Vision-and-Language Navigation}

\author{
    \IEEEauthorblockN{
        Vicky Feliren$^{1,2}$, 
        A. Taufiq Asyhari$^{1}$, 
        Muhamad Risqi U. Saputra$^{1}$
    }
    \IEEEauthorblockA{
        $^{1}$Monash University, Indonesia \qquad $^{2}$SEACrowd
    }
}
\maketitle

\begin{abstract}
Uncertainty estimation for Vision-Language-Navigation (VLN) models is a critical task since it can help identify ambiguous and unreliable predictions, enabling agents to make safer navigation decisions. As one of the most advanced uncertainty estimation frameworks, conformal prediction (CP) offers a promising approach for uncertainty estimation in VLN. However, given that VLN agent requires a sequence of steps, standard calibration in conformal prediction fails to provide coverage guarantee it promises over a dependent, variable-length VLN episode. To this end, we propose Episode-Normalized Conformal Prediction (ENCP), which rescales a nonconformity score by the policy's residual confidence and calibrates one maximum score per episode. Under exchangeable calibration and test episodes, this construction covers the ground truth at every step with probability at least $1-\alpha$, while allowing dependence among steps within an episode. Across four VLN policies and three nonconformity scores on R2R and REVERIE dataset, ENCP meets all reported empirical step-coverage targets on the seen-to-unseen evaluation. These results demonstrate that ENCP can provide model-agnostic uncertainty estimates, which might be useful for determining when a VLN agent should defer to a more capable predictor, including human assistance.

\end{abstract}

\begin{IEEEkeywords}
vision-language navigation, conformal prediction, uncertainty
quantification, human-robot interaction, safe autonomy
\end{IEEEkeywords}

\begin{figure*}[t]
\centering
\includegraphics[width=\textwidth]{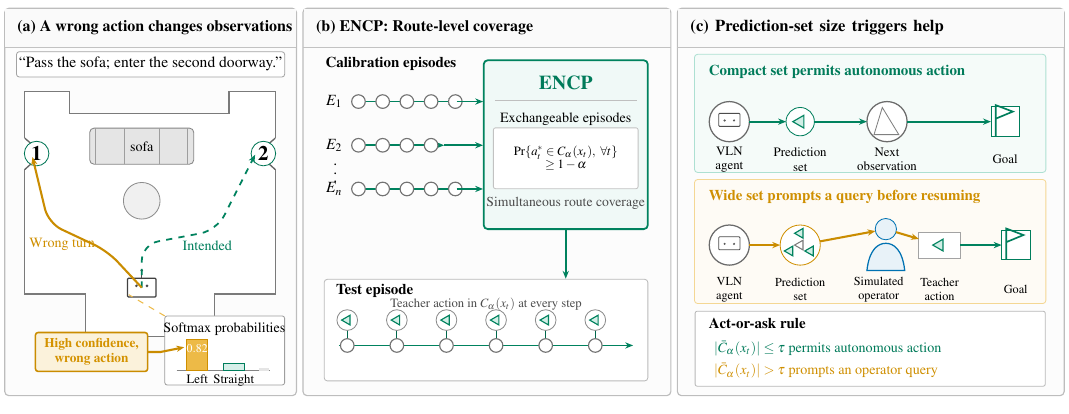}
\caption{ENCP identifies when a navigation agent should ask for help before an
error compounds. It calibrates complete episodes and triggers a query when the
action set exceeds a chosen size budget.}
\label{fig:teaser}
\end{figure*}

\section{Introduction}
\label{sec:intro}

Vision-and-Language Navigation (VLN) enables embodied agents to navigate physical environments using natural language instructions~\cite{anderson2018r2r,zhang2024survey}. Such capabilities have applications in assistive robotics, household robots, and autonomous systems operating in complex environments where agents must follow human instructions while adapting to their surroundings. Deploying these agents safely in real-world settings, however, demands reliable uncertainty estimation as sequential navigation inherently suffers from compounding errors: a single wrong turn corrupts subsequent visual feedback, forcing the agent to act on invalid environmental states~\cite{ross2011reduction}. While an uncorrected mistake merely degrades a benchmark score, continuing confidently while lost in physical spaces risks collisions or task failure. Consequently, agents must identify unreliable steps and request human intervention before errors accumulate~\cite{amodei2016concrete,hendrycks2021unsolved,lindemann2023conformal}. This might requires simultaneously resolving linguistic ambiguity and visual uncertainty across dynamic trajectories (e.g., grounding ``enter the second door'').

Modern VLN policies achieve high navigation success~\cite{hong2021vlnbert,chen2021hamt,chen2022duet}, yet overall task completion metrics fail to indicate whether a specific action is trustworthy. Existing self-monitoring and help-request mechanisms recognize this limitation~\cite{ma2019selfmonitoring,abraham2025ask}, but they rely on empirical heuristics rather than formal calibration tied to a user-specified error rate.

The policy's largest softmax probability is often used to flag how certain the decisions, with smaller values indicating that the policy has no strongly preferred action; however, this value is not a calibrated probability that the selected action is incorrect. Because neural networks can be
overconfident~\cite{guo2017calibration} and uncertainty estimates often
deteriorate when the deployment environment differs from the training data
\cite{ovadia2019trust}, a cutoff has no distribution-free
interpretation without a calibration argument. This matters in human--automation
teams, where a useful warning should support appropriate reliance rather than
encourage blind trust or constant intervention~\cite{parasuraman1997humans,
lee2004trust}.

Conformal Prediction (CP) is a statistical framework that converts point predictions into valid prediction sets. It returns a set of plausible actions at a user-chosen
miscoverage (risk) level $\alpha$, whose target coverage is $1-\alpha$. A one-action set
lets the agent proceed. A larger set tells it to ask. When the calibration
routes and new routes are generated under the same conditions, CP guarantees
that the set includes the correct choice at the requested rate. The policy's
probabilities need not be calibrated
\cite{vovk2005alrw,lei2018distributionfree,angelopoulos2023gentle}. This
act-or-ask interpretation has already made CP useful for robot planning and
human assistance~\cite{ren2023knowno,liang2024introspective}.


In the standard CP, the coverage guarantee usually applies to a single prediction. However, this approach will not work for VLN since it does not ensure that the action deemed correct is included at every step along a route~\cite{vovk2005alrw,angelopoulos2023gentle}. Each action changes the agent's state and
therefore affects the observations and decisions that follow~\cite{ross2011reduction}.
Using individual steps as calibration examples overlooks this dependence and provides no guarantee for the route as a whole. A further limitation arises when the policy assigns nearly all of its probability to one action. This motivates the need to design a bespoke CP that can ensure its coverage guarantee over a dependent, variable-length VLN episode.  

In this work, we propose Episode-Normalized Conformal Prediction (ENCP), a post-training conformal prediction method for VLN that leaves the underlying navigation policy unchanged. ENCP rescales each step-level nonconformity score by the policy’s residual confidence and calibrates one maximum normalized score per episode. Under exchangeability of the calibration and test episodes, the resulting prediction sets contain the ground-truth action at every step of a test episode with probability at least $1-\alpha$, while allowing dependence among steps within an episode and accommodating variable episode lengths. Episode-level calibration provides this coverage guarantee, whereas residual-confidence rescaling helps calibrated thresholds transfer across the evaluated policies.

This work makes 3 contributions.
\begin{itemize}
    \item We show how standard, step-wise CP can fail to satisfy its coverage guarantee in VLN and measure when its action
    sets cease to reflect the miscoverage level $\alpha$.
    \item We develop ENCP, an episode-level calibration CP with a guarantee that covers the correct action throughout a complete route.
    \item We evaluate ENCP against step-pooled CP using 4 VLN models and 3 nonconformity scores on the R2R and REVERIE datasets, examine coverage when calibration and test buildings differ, and use prediction-set size to decide when to request an action from a simulated ground-truth assistant.
\end{itemize}

\section{Related Work}
\label{sec:related}

\textbf{Vision-and-language navigation.} Room-to-Room (R2R) introduced instruction following on the Matterport3D navigation graph~\cite{anderson2018r2r,chang2017matterport}, and VLN later expanded to continuous control~\cite{krantz2020vlnce} and remote object grounding in REVERIE (Remote Embodied Visual Referring Expression in Real Indoor Environments)~\cite{qi2020reverie}. Recurrent sequence models~\cite{tan2019envdrop} preceded transformer policies such as DUET, HAMT, and Recurrent VLN-BERT \cite{chen2022duet,chen2021hamt,hong2021vlnbert}, evaluated here. Learned help-request heads detect ambiguous instructions \cite{abraham2025ask}; however, their reported reliability is empirical, failing to provide a finite-sample guarantee at a risk level chosen by the user.

\textbf{Uncertainty estimation.} Bayesian approximations \cite{gal2016dropout}, deep ensembles~\cite{lakshminarayanan2017ensembles}, and selective prediction~\cite{geifman2017selective} yield uncertainty or abstention scores, while classical robotics represents and propagates probabilistic states~\cite{thrun2005probabilistic}. These methods do not generally provide distribution-free finite-sample coverage at a chosen $\alpha$.

\textbf{Conformal prediction for decisions and sequences.} Classification CP includes threshold scores (THR), adaptive prediction sets (APS), and rank-regularized adaptive prediction sets (RAPS) \cite{sadinle2019least,romano2020aps,angelopoulos2021raps}. Beyond classification, CP covers LLM planning, model predictive control, and multi-agent motion planning \cite{liang2024introspective,lindemann2023conformal,dixit2023multiagent}. Specifically, KnowNo uses prediction-set size to request human help, calibrating weakest confidence over a fixed plan~\cite{ren2023knowno}. In contrast, ENCP calibrates maximum nonconformity over a episode-length VLN trajectory, whose action set can change at every step.

\textbf{Dependence and distribution shift.} Standard split CP assumes exchangeable calibration and test units. Methods for nonexchangeable data can bound the resulting coverage loss~\cite{barber2023beyond}, while conformal time-series methods allocate risk across a fixed horizon \cite{stankeviciute2021cfrnn}. ENCP uses the complete episode as its calibration unit, so steps within an episode may depend on one another and no fixed horizon is needed. However, this design choice does not remove the distribution shift. R2R calibrates on seen buildings and evaluates on unseen buildings. Weighted CP can correct covariate shift when density ratios are available \cite{tibshirani2019covariateshift}. We report coverage under exchangeable splits and across the seen-to-unseen boundary.

\section{Problem Formulation}
\label{sec:problem}
In this section, we formulate VLN as sequential classification in discrete graph, define the 3 nonconformity
scores used in commonly used CP, and show why calibration over pooled steps does not yield trajectory-level coverage. An experimental condition is one policy--dataset pair.

\subsection{VLN as a discrete graph}
An \emph{episode} is one attempt to follow an instruction $L$ from a given
start node. The agent moves on an undirected graph of panoramic Matterport3D
viewpoints~\cite{chang2017matterport} and terminates by selecting
\textsc{stop}. At step $t$, let $o_t$ denote the current panorama, $h_t$ the
preceding observation--action history, and $\mathcal A_t$ the admissible
actions, comprising adjacent viewpoints and \textsc{stop}. We write
$x_t=(L,o_t,h_t,\mathcal A_t)$ for the complete decision context. Define the
policy probability of action $a$ by
\begin{equation}
p(a\mid x_t)=\pi_\theta(a\mid x_t),\qquad a\in\mathcal A_t,
\label{eq:policy}
\end{equation}
and define
$\pmax(x_t)=\max_{a\in\mathcal A_t}p(a\mid x_t)$. We write $\pmax$ for $\pmax(x_t)$ when considering a step $t$. The action-set size varies with the local graph degree. For example, DUET's global branch additionally considers the reachable frontier ~\cite{chen2022duet}. Section~\ref{sec:guarantee} treats each resulting episode as one calibration or test unit.

\subsection{Nonconformity scores and split-conformal calibration}
In conformal prediction, a nonconformity score measures how atypical a candidate output appears relative to a model's prediction, with higher values indicating greater disagreement or lower confidence.
In VLN, we instantiate this score as $s(x_t,a)$ and assigns larger values to actions that conform
less closely to the policy output at $x_t$. The score must be fixed before conformal calibration. For a labeled
decision context $(x_t,a_t^*)$, its calibration score, the nonconformity score assigned to the ground-truth action, is $s(x_t,a_t^*)$.

In this work, we use THR and the deterministic ($U=0$) forms of APS and RAPS as standard non-conformity scores. Here, $U \sim \operatorname{Uniform}(0,1)$ is an auxiliary variable that randomizes the contribution of the candidate action's probability mass; setting $U=0$ removes this randomization. THR is
the complement of the candidate probability~\cite{sadinle2019least}:
\begin{equation}
s_{\mathrm{THR}}(x_t,a)=1-p(a\mid x_t).
\label{eq:thr}
\end{equation}
APS orders actions by decreasing $p(a\mid x_t)$, with ties resolved by a fixed rule.
Let $\operatorname{rank}_t(a)\in\{1,\ldots,|\mathcal A_t|\}$ denote the
resulting position of action $a$. APS assigns to $a$ the probability mass of
all preceding actions~\cite{romano2020aps}:
\begin{equation}
s_{\mathrm{APS}}(x_t,a)
=\sum_{b\in\mathcal A_t:\,\operatorname{rank}_t(b)<\operatorname{rank}_t(a)}
p(b\mid x_t).
\label{eq:aps}
\end{equation}
Thus the top-ranked action has score 0. RAPS adds a penalty beyond rank
$k_{\mathrm{reg}}$~\cite{angelopoulos2021raps}:
\begin{equation}
s_{\mathrm{RAPS}}(x_t,a)=s_{\mathrm{APS}}(x_t,a)
+\lambda\big[\operatorname{rank}_t(a)-k_{\mathrm{reg}}\big]_+,
\label{eq:raps}
\end{equation}
where $[u]_+=\max(u,0)$; we use $\lambda=0.1$ and $k_{\mathrm{reg}}=2$.

For standard split CP, let $s_1,\ldots,s_n$ be exchangeable calibration scores
for one fixed score function. Exchangeability in conformal prediction means that the joint probability distribution of a sequence of data points does not change when their order is permuted. Given a miscoverage level $\alpha\in(0,1)$, set
$k=\lceil(n+1)(1-\alpha)\rceil$. If $s_{(k)}$ denotes the $k$-th order
statistic, the conformal threshold is
\begin{equation}
\hat q(\alpha)=
\begin{cases}
s_{(k)}, & k\leq n,\\
+\infty, & k=n+1.
\end{cases}
\label{eq:qhat}
\end{equation}
The corresponding prediction set is
\begin{equation}
C_\alpha(x_t)=\{a\in\mathcal A_t:s(x_t,a)\leq\hat q(\alpha)\}.
\label{eq:base-set}
\end{equation}
If the calibration scores and the test score $s(x_t,a_t^*)$ are exchangeable,
then $\mathbb P\{a_t^*\in C_\alpha(x_t)\}\geq1-\alpha$.

\subsection{Why split CP degenerates here}
\label{ssec:failure}
The pooled-step baseline CP treats each labeled step as an independent calibration unit. By \eqref{eq:qhat}, the resulting threshold does not exceed $\epsilon$ whenever at least $k$ calibration scores are $\le \epsilon$. 

For $\alpha = 0.30$, APS and RAPS return only the policy’s maximum-probability action on DUET and HAMT, with THR exhibiting analogous behavior. Split CP similarly demonstrates undercoverage across the evaluated configurations (Figure~\ref{fig:base-failure}). Moreover, modifying the policy’s probability calibration does not resolve this quantile-selection phenomenon (Section~\ref{sec:disc}).

\begin{figure}[t]
\centering
\includegraphics{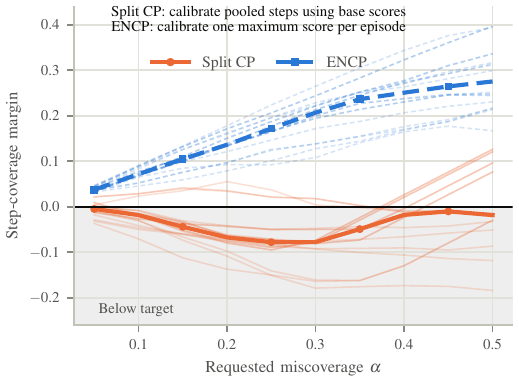}
\caption{Split CP undercovers across most settings, making its nominal risk
level unreliable here. ENCP calibrates one maximum normalized score per episode
and meets the target throughout. Negative values indicate undercoverage.}
\label{fig:base-failure}
\end{figure}

Split CP assumes that calibration and test scores are exchangeable. However, exchangeability at the level of complete episodes does not generally entail exchangeability of the individual time steps obtained by pooling steps across episodes. Within an episode, time steps are statistically dependent due to their shared action–observation history; episode lengths may differ; and actions can causally influence subsequent observations, further violating step-wise exchangeability. Moreover, marginal coverage guarantees for a single time step do not imply simultaneous coverage across all time steps within an episode. Accordingly, the subsequent section defines a mapping from each episode to a single calibration score.

\section{Episode-Normalized Conformal Prediction}
\label{sec:method}
Standard CP calibration fails to provide coverage guarantees in sequential VLN due to intra-episode step dependencies and dynamic action spaces. To resolve this, Episode-Normalized Conformal Prediction (ENCP) derives its name from two core mechanisms: \textit{episode-level calibration}, which collapses dependent trajectory steps into a single worst-case metric to restore finite-sample guarantees, and \textit{score normalization}, which adjusts nonconformity scores relative to step-varying policy confidence. ENCP can be implemented as either a parameter-free variant using raw policy probabilities or a learning-based variant that fits score weights prior to calibration (Figure~\ref{fig:pipeline}).

\begin{figure*}[t]
\centering
\includegraphics[width=\textwidth]{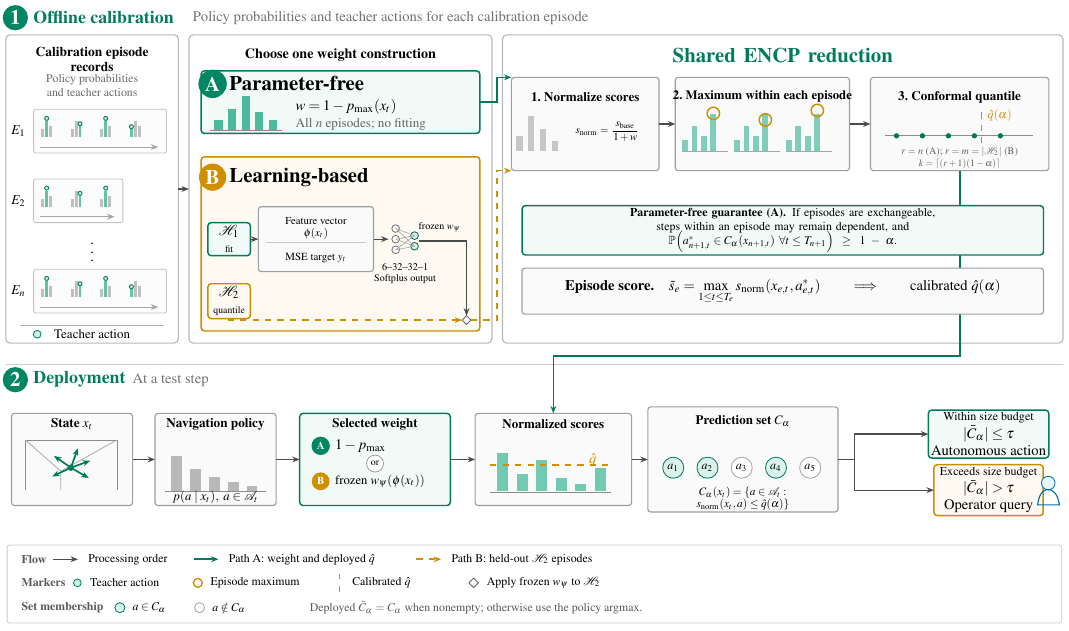}
\caption{ENCP calibrates one worst-step score per episode, then asks for help
when the deployed set exceeds $\tau$.}
\label{fig:pipeline}
\end{figure*}

\subsection{Confidence-adjusted score}
\label{ssec:score}
For any base score $\sbase\in\{s_{\mathrm{THR}},s_{\mathrm{APS}},
s_{\mathrm{RAPS}}\}$ and fixed nonnegative weight rule $w$, ENCP defines
\begin{equation}
\snorm(x_t,a)=\frac{\sbase(x_t,a)}{1+w(x_t)}.
\label{eq:snorm}
\end{equation}

Here, $x_t$ denotes the decision context at step $t$, $a\in\mathcal A_t$ is a candidate action, $\sbase$ is the nonconformity score, $w(x_t)$ is its confidence-dependent weight, and $\snorm$ is the resulting confidence-adjusted nonconformity score.

\paragraph{Parameter-free weight}
\begin{equation}
\begin{aligned}
w_{\mathrm{pf}}(x_t) &= 1-\pmax(x_t),\\
\snorm^{\mathrm{pf}}(x_t,a)
&=\frac{\sbase(x_t,a)}{2-\pmax(x_t)}.
\end{aligned}
\label{eq:pf-weight}
\end{equation}
At a fixed step, every candidate shares the same positive denominator. The
normalization therefore preserves the policy's action ranking. It changes set
membership through the effective threshold
\begin{equation}
a\in\Cset(x_t)
\quad\Longleftrightarrow\quad
\sbase(x_t,a)\leq\hat q(\alpha)[2-\pmax(x_t)].
\label{eq:pf-membership}
\end{equation}
Thus, lower confidence increases the effective threshold and can admit more
actions. Episode
calibration addresses this remaining point mass by changing the calibration
unit.

\paragraph{Learning-based weight}
The learning-based variant uses step features to predict the weight. We define
\begin{equation}
\begin{aligned}
\phi(x_t)&=\left[H(p_t),\ \pmax,\ p_{(1)}-p_{(2)},\right.\\
&\hspace{2.2em}\left.\log|\mathcal A_t|,\ t/T_{\max},\ \alpha\right],\\
w_\psi(x_t)&=\operatorname{softplus}\!\left(f_\psi(\phi(x_t))\right)\geq0,
\end{aligned}
\label{eq:weight-features}
\end{equation}
Here, $H(p_t)$ is entropy over the candidate actions, $\pmax$ is the largest
action probability, and $p_{(1)}-p_{(2)}$ is the top-two probability gap.
The term $\log|\mathcal A_t|$ is the log action-set size, $t/T_{\max}$ is the
normalized step index, and $\alpha$ is the requested miscoverage level. The
network $f_\psi$ has two ReLU hidden layers of width $32$; its Softplus output
keeps the denominator in \eqref{eq:snorm} positive.

We train the network to assign more weight to steps where the policy is
confident but gives little probability to the ground-truth action. For each labeled
step in fit subset $\mathcal H_1$, the target is
\begin{equation}
y_t=\operatorname{clip}\!\left(
\frac{1-p(a_t^*\mid x_t)}{1-\pmax(x_t)},0,10\right).
\label{eq:weight-target}
\end{equation}
Clipping limits values caused by a denominator near zero. We then minimize
\begin{equation}
\mathcal L(\psi)=\frac{1}{N_1}\sum_{(e,t)\in\mathcal H_1}
\left[w_\psi(x_{e,t})-y_{e,t}\right]^2.
\label{eq:weight-loss}
\end{equation}
After fitting on $\mathcal H_1$, we freeze $w_\psi$ and estimate the conformal
quantile on disjoint subset $\mathcal H_2$. Conditional on $\mathcal H_1$, the
score is fixed before calibration, so $\mathcal H_2$ remains exchangeable with
a test episode. Using $\mathcal H_2$ for both fitting and calibration would
invalidate this split-conformal argument.

\subsection{One calibration score per episode}
\label{ssec:epmax}
For calibration episode $e$ of length $T_e$, ENCP computes
\begin{equation}
\tilde s_e=\max_{1\leq t\leq T_e}
\snorm(x_{e,t},a^*_{e,t}).
\label{eq:epmax}
\end{equation}
Every episode contributes one number, regardless of its length. We require
exchangeability across episodes but make no such assumption about steps within
an episode. For parameter-free ENCP, bounding the maximum bounds every step and
gives simultaneous trajectory coverage (Theorem~\ref{thm:cov}). The maximum is zero only when all
step scores are zero. This condition makes a zero threshold less common than
under step-pooled calibration (Proposition~\ref{prop:atom}).

\subsection{Calibration and deployment}
\label{ssec:infer}
Let $r=n$ for the parameter-free variant and
$r=m=|\mathcal H_2|$ for the learning-based variant, as in
Figure~\ref{fig:pipeline}. Relabel the corresponding episode scores as
$\tilde s_1,\ldots,\tilde s_r$, and set
$k=\lceil(r+1)(1-\alpha)\rceil$. ENCP uses
\begin{equation}
\hat q(\alpha)=
\begin{cases}
\tilde s_{(k)}, & k\leq r,\\
+\infty, & k=r+1,
\end{cases}
\label{eq:qhat-ep}
\end{equation}
where $\tilde s_{(k)}$ is the $k$-th order statistic. At a test step,
thresholding produces the raw prediction set
\begin{equation}
\Cset(x_t)=\{a\in\mathcal A_t:
\snorm(x_t,a)\leq\hat q(\alpha)\}.
\label{eq:set}
\end{equation}
The deployed set is
\begin{equation}
\Cdep(x_t)=
\begin{cases}
\Cset(x_t), & \Cset(x_t)\neq\varnothing,\\
\{\arg\max_{a\in\mathcal A_t}p(a\mid x_t)\}, & \text{otherwise},
\end{cases}
\label{eq:set-fallback}
\end{equation}
where the fixed policy tie-breaking rule resolves the argmax. For help-seeking,
an operator may choose a size budget $\tau$ and
request assistance whenever $|\Cdep(x_t)|>\tau$
(Section~\ref{ssec:closedloop}).

\section{Coverage Guarantee}
\label{sec:guarantee}
This section proves coverage for parameter-free ENCP. For this purpose, before calibration, we freeze the navigation policy, a base score $\sbase$, and
$w_{\mathrm{pf}}(x_t)=1-\pmax(x_t)$. For episode $E_i$,
let $x_{i,t}$ and $a_{i,t}^*$ denote the decision context and ground-truth action at
step $t$, and let $T_i=T(E_i)\geq1$. Throughout this section, $\snorm$ denotes
the parameter-free score. The fixed episode-score map is
\begin{equation}
\varphi(E_i)=\tilde s_i
=\max_{1\leq t\leq T_i}
\frac{\sbase(x_{i,t},a_{i,t}^*)}{2-\pmax(x_{i,t})}.
\label{eq:pf-episode-score}
\end{equation}

\begin{assumption}[Exchangeable episodes]
\label{asm:exch}
The joint distribution of the $n$ calibration episodes and one test episode,
$(E_1,\ldots,E_n,E_{n+1})$, is invariant under permutations of their indices.
In particular, the assumption holds for i.i.d. episodes from one task
distribution.
\end{assumption}

This assumption applies to complete episodes. Steps within an episode may
depend on one another, episode lengths may vary, the navigation policy may be
miscalibrated, and the scores may contain ties or point masses.

For the test episode, let
\begin{equation}
\mathcal E_{n+1}
=\bigcap_{t=1}^{T_{n+1}}
\{a_{n+1,t}^*\in \Cset(x_{n+1,t})\}
\label{eq:coverage-event}
\end{equation}
be the event that every ground-truth action is covered. Define simultaneous
trajectory coverage $\covsim$ and step-averaged coverage $\covstep$ by
\begin{align}
\covsim
&=\mathbb P(\mathcal E_{n+1}),
\label{eq:cov-traj}\\
\covstep
&=\mathbb E\!\left[
\frac{1}{T_{n+1}}\sum_{t=1}^{T_{n+1}}
\mathbf 1\{a_{n+1,t}^*\in \Cset(x_{n+1,t})\}
\right].
\label{eq:cov-step}
\end{align}

\begin{theorem}[Simultaneous trajectory coverage]
\label{thm:cov}
Fix $\alpha\in(0,1)$ and let
$k=\lceil(n+1)(1-\alpha)\rceil$. Define $\hat q(\alpha)$ as the $k$-th
smallest value among the calibration scores
$\tilde s_1,\ldots,\tilde s_n$, with $\hat q(\alpha)=+\infty$ if $k=n+1$.
Under Assumption~\ref{asm:exch},
\begin{equation}
\mathbb{P}(\mathcal E_{n+1})\geq 1-\alpha.
\label{eq:thm-cov}
\end{equation}
\end{theorem}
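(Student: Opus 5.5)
The plan is to reduce the statement to standard split-conformal coverage for a single scalar per episode. I would carry out three steps in order.

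\textbf{Step 1: the coverage event equals a scalar inequality.} Fix the policy, $\sbase$, and $w_{\mathrm{pf}}$ before calibration. For the test episode, every denominator $2-\pmax(x_{n+1,t})$ lies in $[1,2)$ and is positive. So by \eqref{eq:set} and \eqref{eq:pf-membership},
\[
a_{n+1,t}^*\in\Cset(x_{n+1,t})\iff \snorm(x_{n+1,t},a_{n+1,t}^*)\le\hat q(\alpha).
\]
Intersecting over $t=1,\ldots,T_{n+1}$ and using $T_{n+1}\ge1$, the event $\mathcal E_{n+1}$ is exactly $\{\tilde s_{n+1}\le\hat q(\alpha)\}$, where $\tilde s_{n+1}=\varphi(E_{n+1})$ as in \eqref{eq:pf-episode-score}. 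The maximum of finitely many numbers is at most $q$ if and only if each of them is. This holds for every realization, so no assumption on dependence among steps is needed here.

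\textbf{Step 2: exchangeability transfers to the episode scores.} Since $\varphi$ is a fixed measurable map applied coordinatewise and does not depend on the data, Assumption~\ref{asm:exch} implies that $(\tilde s_1,\ldots,\tilde s_{n+1})$ is exchangeable. Variable lengths $T_i$ are harmless because $\varphi$ is defined on whole episodes. The frozen-policy hypothesis matters here: the policy, score, and weight must not be fit on $E_1,\ldots,E_n$.

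\textbf{Step 3: the quantile lemma, where ties need care.} If $k=n+1$, then $\hat q=+\infty$ and coverage is trivial. Otherwise $k\le n$. Let $q^\star$ be the $k$-th smallest of all $n+1$ scores $\tilde s_1,\ldots,\tilde s_{n+1}$. I would first show the deterministic identity
\[
\tilde s_{n+1}\le\hat q(\alpha)\iff\tilde s_{n+1}\le q^\star .
\]
If $\tilde s_{n+1}\le q^\star$, inserting it does not shift the $k$-th order statistic upward, so $\hat q\ge q^\star\ge\tilde s_{n+1}$. Conversely, if $\tilde s_{n+1}>q^\star$, then at least $k$ of the calibration scores are $\le q^\star$, so $\hat q\le q^\star<\tilde s_{n+1}$.

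By exchangeability,
\[
\mathbb P(\tilde s_{n+1}\le q^\star)=\frac{1}{n+1}\,\mathbb E\Big[\sum_{i=1}^{n+1}\mathbf 1\{\tilde s_i\le q^\star\}\Big]\ge\frac{k}{n+1}\ge1-\alpha.
\]
The middle inequality holds because at least $k$ of the $n+1$ scores are $\le$ their own $k$-th order statistic, even with ties or point masses. The last inequality follows from $k=\lceil(n+1)(1-\alpha)\rceil$. Combining with Step 1 gives \eqref{eq:thm-cov}.

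I expect the main obstacle to be Step 3's handling of ties and atoms, for example many episodes with $\tilde s_i=0$. The sandwich argument via $q^\star$, rather than a rank-uniformity argument, avoids needing continuous distributions. The expectation argument is also cleaner than conditioning on the empirical measure, since it does not require almost-sure distinctness.
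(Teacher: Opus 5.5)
Your proof is correct and follows the paper's argument: you reduce $\mathcal E_{n+1}$ to $\{\tilde s_{n+1}\le\hat q(\alpha)\}$ through the episode maximum, pass exchangeability through the fixed map $\varphi$, and compare $\hat q(\alpha)$ with the $k$-th order statistic of all $n+1$ scores. The paper uses only the one-sided inequality $\hat q(\alpha)\ge u_{(k)}$ and cites the exchangeable-rank bound, whereas you prove that bound inline and also establish the converse direction, which this lower bound does not need.
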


\begin{proof}
Let $\hat q=\hat q(\alpha)$. By \eqref{eq:set} and
\eqref{eq:pf-episode-score},
\begin{equation}
\begin{split}
\mathcal E_{n+1}
&=\left\{
\max_{1\leq t\leq T_{n+1}}
\snorm(x_{n+1,t},a_{n+1,t}^*)\leq\hat q
\right\}\\
&=\{\tilde s_{n+1}\leq\hat q\}.
\end{split}
\label{eq:event}
\end{equation}

Because $\varphi$ is fixed, the scores
$\tilde s_1,\ldots,\tilde s_{n+1}$ are exchangeable. Suppose first that
$k\leq n$, and let $u_{(k)}$ be their $k$-th order statistic. Removing
$\tilde s_{n+1}$ cannot decrease the $k$-th order statistic, so
$\hat q\geq u_{(k)}$. The exchangeable-rank bound
\cite{angelopoulos2023gentle} gives
\begin{equation}
\mathbb{P}(\tilde s_{n+1}\leq\hat q)
\geq \mathbb{P}(\tilde s_{n+1}\leq u_{(k)})
\geq\frac{k}{n+1}\geq1-\alpha.
\end{equation}
If $k=n+1$, then $\hat q=+\infty$ and the probability is 1. Equation
\eqref{eq:event} now proves \eqref{eq:thm-cov}. Since
$\Cset(x_t)\subseteq\Cdep(x_t)$, the deployed set has at
least the same coverage.
\end{proof}

\begin{corollary}[Step-averaged coverage]
\label{cor:step}
Under Assumption~\ref{asm:exch}, the threshold of
Theorem~\ref{thm:cov} also satisfies $\covstep\geq1-\alpha$.
\end{corollary}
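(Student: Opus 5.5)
The plan is to deduce Corollary~\ref{cor:step} directly from Theorem~\ref{thm:cov} through a pointwise inequality between the step-averaged indicator and the simultaneous-coverage indicator. No new exchangeability argument is needed.

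\textbf{Pointwise bound.} Fix any realization of the calibration episodes and the test episode. Because $T_{n+1}\geq1$, the average in \eqref{eq:cov-step} is well defined and lies in $[0,1]$. On the event $\mathcal E_{n+1}$ of \eqref{eq:coverage-event}, every indicator $\mathbf 1\{a_{n+1,t}^*\in\Cset(x_{n+1,t})\}$ equals one, so the average equals one. Off this event, the average is still nonnegative. Hence, pointwise,
\begin{equation*}
\frac{1}{T_{n+1}}\sum_{t=1}^{T_{n+1}}\mathbf 1\{a_{n+1,t}^*\in\Cset(x_{n+1,t})\}\;\geq\;\mathbf 1\{\mathcal E_{n+1}\}.
\end{equation*}

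\textbf{Taking expectations.} Both sides are bounded, measurable functions of $(E_1,\ldots,E_{n+1})$. Taking expectations with respect to the joint law of these episodes, monotonicity of expectation gives $\covstep\geq\mathbb P(\mathcal E_{n+1})=\covsim$. Theorem~\ref{thm:cov}, applied under Assumption~\ref{asm:exch} with the same threshold $\hat q(\alpha)$, yields $\covsim\geq1-\alpha$. Therefore $\covstep\geq1-\alpha$.

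\textbf{Remarks.} The only step needing care is that the randomness in $T_{n+1}$ and in the threshold is handled correctly. This is fine because the inequality holds for every realization, including the case $\hat q=+\infty$, so no conditioning on the episode length is required. The same argument also applies to the deployed set $\Cdep$, since $\Cset\subseteq\Cdep$ and each indicator can only increase.
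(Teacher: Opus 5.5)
Your proposal is correct and matches the paper's proof: the paper also bounds the covered-step fraction pointwise below by $\mathbf 1\{\mathcal E_{n+1}\}$, takes expectations, and applies Theorem~\ref{thm:cov} to obtain $\covstep\geq\covsim\geq1-\alpha$. Your added remarks, that $T_{n+1}\geq1$ keeps the average well defined and that the bound carries over to $\Cdep$, are valid refinements the paper does not spell out.
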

\begin{proof}
Let
\[
R_{n+1}=\frac{1}{T_{n+1}}\sum_{t=1}^{T_{n+1}}
\mathbf 1\{a_{n+1,t}^*\in \Cset(x_{n+1,t})\}
\]
be the covered-step fraction in the test episode. Pointwise,
\[
R_{n+1}\geq\mathbf 1\{\mathcal E_{n+1}\}.
\]
Taking expectations and applying Theorem~\ref{thm:cov} gives
$\covstep\geq\covsim\geq1-\alpha$.
\end{proof}

Figure~\ref{fig:theorem-check} evaluates parameter-free ENCP under the theorem's
premise and across the shifted benchmark split. Across 300 random within-pool
splits, mean
$\covsim$ differs from nominal by at most $0.0017$ over the 6 reported
conditions and 3 $\alpha$ levels. Coverage is lower in the seen-to-unseen
comparison, where episode exchangeability is not assumed.

\begin{figure*}[t]
\centering
\includegraphics[width=\textwidth]{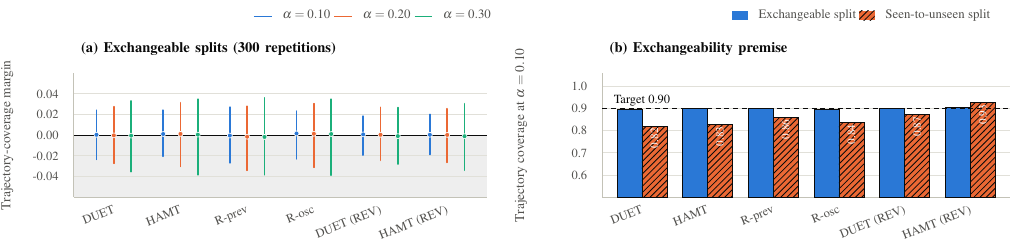}
\caption{Parameter-free ENCP attains nominal trajectory coverage under episode
exchangeability but not under seen-to-unseen shift. Error bars are empirical
95\% ranges over 300 exchangeable splits; the shifted bars lie outside
Theorem~\ref{thm:cov}'s premise.}
\label{fig:theorem-check}
\end{figure*}

The guarantee in Theorem~\ref{thm:cov} is marginal over the calibration and
test episodes; it does not condition on a realized calibration
sample~\cite{lei2018distributionfree}. Ties may make the bound conservative.
The episode maximum avoids allocating $\alpha$ across time.

\begin{proposition}[Condition for a zero threshold]
\label{prop:atom}
For parameter-free ENCP, all 3 base scores in Section~\ref{sec:problem} are
nonnegative. For any calibration episode $e$,
\begin{equation}
\tilde s_e=0
\quad\Longleftrightarrow\quad
\sbase(x_{e,t},a_{e,t}^*)=0\ \text{for every }t=1,\ldots,T_e.
\label{eq:atom}
\end{equation}
Consequently, $\hat q(\alpha)=0$ exactly when at least
$k=\lceil(n+1)(1-\alpha)\rceil$ of the $n$ calibration episodes have zero
score at every step.
\end{proposition}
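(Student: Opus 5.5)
The plan has three parts: show the base scores are nonnegative, use the positive denominator to characterize a zero episode maximum, and then read the zero-threshold condition off the order statistic in \eqref{eq:qhat-ep}.

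\textbf{Nonnegativity of the base scores.} Each score is checked directly from its definition.
\begin{itemize}
\item For THR, $s_{\mathrm{THR}}=1-p(a\mid x_t)\ge 0$, since $p(a\mid x_t)\le 1$.
\item For APS, \eqref{eq:aps} is a sum of probabilities, so it is $\ge 0$. An empty sum, which occurs for the top-ranked action, equals $0$.
\item For RAPS, \eqref{eq:raps} adds $\lambda[\cdot]_+\ge0$ to the APS score, using $\lambda=0.1>0$. So it is also nonnegative.
\end{itemize}
Next, the parameter-free denominator $2-\pmax(x_t)$ lies in $[1,2)$, because $\pmax\in(0,1]$. It is therefore strictly positive, so each step score $\snorm(x_{e,t},a^*_{e,t})$ is nonnegative. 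It equals zero if and only if $\sbase(x_{e,t},a^*_{e,t})=0$.

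\textbf{The equivalence \eqref{eq:atom}.} By \eqref{eq:pf-episode-score}, $\tilde s_e$ is a maximum of finitely many nonnegative numbers, with $T_e\ge1$ so the set is nonempty. Such a maximum is $0$ exactly when every term is $0$. In one direction, if every term is zero then the maximum is zero. In the other, if the maximum is zero, each term lies between $0$ and the maximum and so is zero. Combining this with the previous step gives \eqref{eq:atom}.

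\textbf{The consequence for $\hat q(\alpha)$.} This is the only step needing care, because of the $+\infty$ convention in \eqref{eq:qhat-ep}. If $k=n+1$, then $\hat q=+\infty\neq0$. In that case no set of $n$ episodes can contain $k$ zero episodes, so both sides of the claim are false and the equivalence holds vacuously.

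If $k\le n$, all $\tilde s_i\ge0$, so the $k$-th order statistic $\tilde s_{(k)}$ equals $0$ if and only if at least $k$ of the $\tilde s_i$ equal $0$. For the forward direction, if fewer than $k$ are zero, then the $k$-th smallest value is positive. For the reverse direction, if at least $k$ are zero, then the $k$ smallest values are all $0$. Applying \eqref{eq:atom} episode by episode, $\tilde s_i=0$ means zero base score at every step.

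The main point to state carefully is that the count must be exactly the number of zero episodes among $n$. A threshold equal to zero cannot arise from the $+\infty$ branch, so reading the claim with $k\le n$ implicit (or vacuous otherwise) is the intended interpretation.
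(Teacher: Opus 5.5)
Your proposal is correct and follows the same three-step argument as the paper: the denominator $2-\pmax$ is positive, so it preserves zeros; a maximum of nonnegative numbers vanishes exactly when every term does; and the $k$-th order statistic of nonnegative values is zero exactly when at least $k$ of them are zero. You also check explicitly that THR, APS, and RAPS are nonnegative and handle the $k=n+1$ branch, where the threshold is $+\infty$; the paper's proof leaves both points implicit.
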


\begin{proof}
The denominator $2-\pmax(x_t)$ lies in $[1,2]$, so normalization preserves
zeros. A maximum of nonnegative numbers is zero exactly when every number is
zero. The $k$-th order statistic is zero exactly when at least $k$ observations
are zero.
\end{proof}

The construction is the score-space counterpart of calibrating minimum
confidence over a plan~\cite[Prop.~2]{ren2023knowno}, extended here to
variable-length VLN episodes with changing action sets.

\begin{remark}[Scope under distribution shift]
\label{rem:premise}
The guarantee is marginal under the episode law in
Assumption~\ref{asm:exch}. Seen and unseen buildings may induce different
episode laws, and operator intervention changes the trajectory distribution.
Section~\ref{sec:exp} evaluates the seen-to-unseen setting.
\end{remark}

\section{Experiments}
\label{sec:exp}
We calibrate on the R2R validation-seen, whose buildings occur in training, and test on validation-unseen (val-unseen) episodes from disjoint building scans~\cite{anderson2018r2r}. We also experiment this in REVERIE~\cite{qi2020reverie} for a better generalization. We evaluate four
VLNs from three architecture families: DUET~\cite{chen2022duet}, HAMT
\cite{chen2021hamt}, and Recurrent VLN-BERT with PREVALENT and OSCAR
initialization (R-prev and R-osc)~\cite{hong2021vlnbert}. Our runs reproduce
each published val-unseen success rate to within one percentage point. We record
prediction sets without changing the policy, except in the closed-loop study
of Section~\ref{ssec:closedloop}. The remaining evaluations therefore retain the
original policy's success rate. Unless noted otherwise, ENCP uses the
parameter-free score in \eqref{eq:snorm}. Base-score columns use the
step-pooled calibration analyzed in Section~\ref{ssec:failure}.

The experiments report step coverage $\covstep$, the mean fraction of covered
steps per episode. Set sizes include the argmax fallback, whereas coverage uses
the set in \eqref{eq:set}; an empty raw set is uncovered.

\subsection{Restored $\alpha$-sensitivity and step coverage}
\label{ssec:main}
Table~\ref{tab:encp} compares each base score with its ENCP counterpart at
three miscoverage levels. Across both datasets, ENCP keeps $\covstep$ above
$1-\alpha$ for every reported backbone, score, and miscoverage level. The base scores for THR, APS, and RAPS undercover in every corresponding entry. Thus, episode calibration restores a
usable response to $\alpha$, but the larger ENCP sets show the cost of meeting
the step-coverage target under the seen-to-unseen shift.

\begin{table*}[t]
\centering
\caption{ENCP meets the step-coverage target across R2R and REVERIE; base CP
undercovers. $\overline{|C|}$ is mean prediction-set size. \textbf{Bold} values satisfy the target coverage guarantee ($1-\alpha$).}
\label{tab:encp}
\footnotesize
\renewcommand{\arraystretch}{1.02}
\begin{tabular*}{\textwidth}{@{\extracolsep{\fill}} ll *{4}{c} c *{4}{c} c *{4}{c} @{}}
\toprule
& & \multicolumn{4}{c}{$\alpha{=}0.10$} & & \multicolumn{4}{c}{$\alpha{=}0.20$} & & \multicolumn{4}{c}{$\alpha{=}0.30$} \\
\cmidrule(lr){3-6}\cmidrule(lr){8-11}\cmidrule(lr){13-16}
& & \multicolumn{2}{c}{Base} & \multicolumn{2}{c}{\textbf{ENCP}} & & \multicolumn{2}{c}{Base} & \multicolumn{2}{c}{\textbf{ENCP}} & & \multicolumn{2}{c}{Base} & \multicolumn{2}{c}{\textbf{ENCP}} \\
\cmidrule(lr){3-4}\cmidrule(lr){5-6}\cmidrule(lr){8-9}\cmidrule(lr){10-11}\cmidrule(lr){13-14}\cmidrule(lr){15-16}
\textbf{VLN} & \textbf{Score} & $\covstep$ & $\overline{|C|}$ & $\covstep$ & $\overline{|C|}$ & & $\covstep$ & $\overline{|C|}$ & $\covstep$ & $\overline{|C|}$ & & $\covstep$ & $\overline{|C|}$ & $\covstep$ & $\overline{|C|}$ \\
\midrule
\multicolumn{16}{l}{\textit{R2R val-unseen}} \\
\midrule
\multirow{3}{*}{DUET} & THR & 0.854 & 2.6 & \textbf{0.965} & 6.7 &  & 0.729 & 1.4 & \textbf{0.936} & 6.1 &  & 0.608 & 1.0 & \textbf{0.897} & 5.2 \\
 & APS & 0.850 & 2.7 & \textbf{0.965} & 6.7 &  & 0.723 & 1.6 & \textbf{0.934} & 6.0 &  & 0.621 & 1.0 & \textbf{0.899} & 5.2 \\
 & RAPS & 0.859 & 2.6 & \textbf{0.972} & 5.7 &  & 0.725 & 1.5 & \textbf{0.928} & 4.0 &  & 0.621 & 1.0 & \textbf{0.879} & 3.1 \\
\midrule
\multirow{3}{*}{HAMT} & THR & 0.883 & 2.4 & \textbf{0.943} & 3.8 &  & 0.726 & 1.3 & \textbf{0.888} & 3.1 &  & 0.608 & 1.0 & \textbf{0.825} & 2.3 \\
 & APS & 0.881 & 2.4 & \textbf{0.943} & 3.8 &  & 0.720 & 1.3 & \textbf{0.888} & 3.1 &  & 0.626 & 1.0 & \textbf{0.818} & 2.2 \\
 & RAPS & 0.883 & 2.7 & \textbf{0.936} & 3.6 &  & 0.721 & 1.3 & \textbf{0.875} & 2.6 &  & 0.626 & 1.0 & \textbf{0.820} & 2.0 \\
\midrule
\multirow{3}{*}{R-prev} & THR & 0.873 & 2.7 & \textbf{0.972} & 4.5 &  & 0.756 & 1.6 & \textbf{0.951} & 4.2 &  & 0.653 & 1.2 & \textbf{0.924} & 3.9 \\
 & APS & 0.874 & 2.7 & \textbf{0.972} & 4.5 &  & 0.758 & 1.7 & \textbf{0.950} & 4.2 &  & 0.650 & 1.2 & \textbf{0.924} & 3.9 \\
 & RAPS & 0.892 & 3.1 & \textbf{0.985} & 4.8 &  & 0.758 & 1.6 & \textbf{0.964} & 4.4 &  & 0.650 & 1.2 & \textbf{0.923} & 3.7 \\
\midrule
\multirow{3}{*}{R-osc} & THR & 0.886 & 3.1 & \textbf{0.971} & 4.5 &  & 0.736 & 1.7 & \textbf{0.938} & 4.1 &  & 0.624 & 1.1 & \textbf{0.907} & 3.7 \\
 & APS & 0.883 & 3.0 & \textbf{0.971} & 4.5 &  & 0.736 & 1.7 & \textbf{0.939} & 4.1 &  & 0.619 & 1.2 & \textbf{0.908} & 3.7 \\
 & RAPS & 0.891 & 3.2 & \textbf{0.988} & 4.8 &  & 0.739 & 1.7 & \textbf{0.962} & 4.4 &  & 0.620 & 1.2 & \textbf{0.928} & 3.9 \\
\midrule
\multicolumn{16}{l}{\textit{REVERIE val-unseen, navigation head}} \\
\midrule
\multirow{3}{*}{DUET} & THR & 0.871 & 5.3 & \textbf{0.975} & 8.8 &  & 0.692 & 2.7 & \textbf{0.932} & 8.1 &  & 0.521 & 1.2 & \textbf{0.898} & 7.6 \\
 & APS & 0.869 & 5.4 & \textbf{0.976} & 8.8 &  & 0.703 & 3.0 & \textbf{0.933} & 8.1 &  & 0.539 & 1.5 & \textbf{0.897} & 7.6 \\
 & RAPS & 0.830 & 3.7 & \textbf{0.964} & 7.1 &  & 0.663 & 2.2 & \textbf{0.879} & 5.0 &  & 0.536 & 1.4 & \textbf{0.802} & 4.0 \\
\midrule
\multirow{3}{*}{HAMT} & THR & \textbf{0.929} & 4.3 & \textbf{0.990} & 5.3 &  & \textbf{0.834} & 3.3 & \textbf{0.975} & 5.1 &  & \textbf{0.718} & 2.3 & \textbf{0.943} & 4.7 \\
 & APS & \textbf{0.928} & 4.3 & \textbf{0.990} & 5.3 &  & \textbf{0.836} & 3.3 & \textbf{0.976} & 5.1 &  & \textbf{0.718} & 2.3 & \textbf{0.944} & 4.7 \\
 & RAPS & \textbf{0.924} & 4.2 & \textbf{0.990} & 5.2 &  & \textbf{0.855} & 3.4 & \textbf{0.978} & 5.0 &  & \textbf{0.704} & 2.2 & \textbf{0.962} & 4.8 \\
\bottomrule
\end{tabular*}
\end{table*}

\subsection{Parameter-free and learning-based weights}
\label{ssec:weights}
Table~\ref{tab:learned} compares the two weight constructions in
Figure~\ref{fig:pipeline}. One calibration half fits $w_\psi$; both variants
then estimate $\hat q$ from the other half. This matched comparison isolates
the effect of the weight. Table~\ref{tab:encp} uses all calibration episodes
for the parameter-free variant because it requires no fitting. Neither
procedure uses val-unseen data for fitting or calibration.

Based on Table~\ref{tab:learned}, at $\alpha=0.10$, step coverage differs by at most $0.006$ across the four R2R
VLNs. The learned variant returns larger sets on all four, with the largest
increase on DUET, from $6.83$ to $7.63$ actions. This added set size does not
produce a consistent coverage gain: coverage rises on DUET and R-osc but falls
on HAMT and R-prev. The learned target identifies confidently wrong steps, but
learning changes where ENCP enlarges a set. We use the parameter-free variant for the main results because it
gives comparable coverage with smaller sets and uses every calibration episode.

\begin{table}[t]
\centering
\caption{The learned weight enlarges sets without a consistent step-coverage
gain. Results use R2R val-unseen, THR, and $\alpha=0.10$.}
\label{tab:learned}
\setlength{\tabcolsep}{5pt}\footnotesize
\begin{tabular}{lcccc}
\toprule
& \multicolumn{2}{c}{Parameter-free} & \multicolumn{2}{c}{Learning-based} \\
\cmidrule(lr){2-3}\cmidrule(lr){4-5}
\textbf{VLN} & $\covstep$ & $\overline{|C|}$ & $\covstep$ & $\overline{|C|}$ \\
\midrule
DUET   & 0.968 & 6.83 & 0.974 & 7.63 \\
HAMT   & 0.952 & 3.95 & 0.949 & 4.01 \\
R-prev & 0.973 & 4.56 & 0.971 & 4.60 \\
R-osc  & 0.970 & 4.49 & 0.971 & 4.63 \\
\bottomrule
\end{tabular}
\end{table}

\subsection{Simulated ENCP help-seeking (DUET, R2R)}
\label{ssec:closedloop}
We test one use of ENCP at deployment: the agent acts when
$|\Cdep|\leq\tau$ and asks for help when $|\Cdep|>\tau$. A larger set therefore
causes the agent to defer.

This experiment simulates help-seeking without human operators. When ENCP requests help because $|\Cdep|>\tau$, the agent executes the ground-truth action supplied by the simulator instead of its policy-selected action. The simulator has access to the goal and complete navigation graph and is assumed to make no errors.

Without help, DUET succeeds on $71.2\%$ of episodes. Querying on sets larger
than 15 actions raises success to $76.9\%$ at a $6.4\%$ ask rate. Lowering
$\tau$ to 8 raises success to $91.1\%$ at a $32.3\%$ ask rate, and querying on
every non-singleton set reaches $98.8\%$ at a $68.7\%$ ask rate
(Figure~\ref{fig:closedloop}). Thus, ENCP set size can control how often the
agent defers to assistance.

This experiment does not select a deployment threshold. We sweep $\tau$ after
observing val-unseen, and the same $\tau$ produces different ask rates for
policies with different set-size distributions (Table~\ref{tab:encp}). The
intervention also changes later observations, so the static ENCP coverage
result does not certify the assisted trajectories.

\begin{figure}[t]
\centering
\includegraphics[width=\columnwidth]{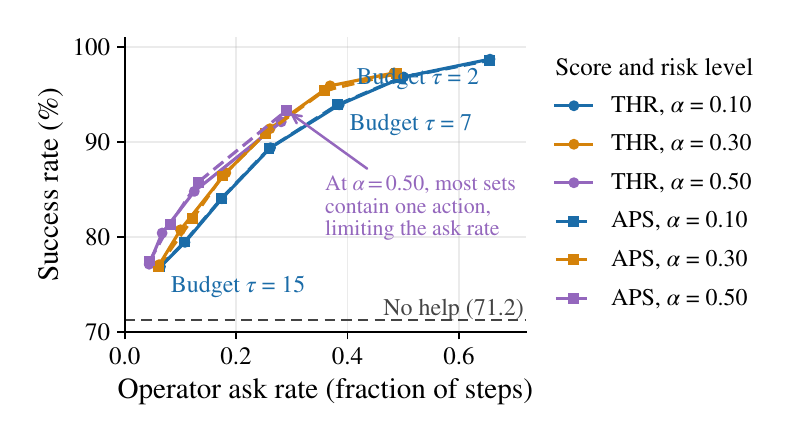}
\caption{Simulation of ENCP help-seeking: when the prediction set exceeds the size budget, the simulator supplies the ground-truth action instead of the policy-selected action.}
\label{fig:closedloop}
\end{figure}

\section{Discussion}
\label{sec:disc}

\textbf{What a large prediction set means.}
Table~\ref{tab:encp} reports both coverage and mean set size because coverage
alone does not measure how often ENCP leaves many actions unresolved. For
parameter-free THR, every admissible action enters the set exactly when
\begin{equation}
\hat q(\alpha)\geq
\frac{1-\min_{a\in\mathcal A_t}p(a\mid x_t)}
{2-\pmax(x_t)}.
\label{eq:thr-saturation}
\end{equation}
The condition depends on the least probable admissible action, as well as the
calibrated threshold and maximum policy probability. A full set signals that
the size rule cannot select one action. It does not rank the candidates, so the
policy still supplies their order. The different mean sizes $\overline{|C|}$ in
Table~\ref{tab:encp} also show why one numerical size budget should not be
assumed to have the same meaning across policies and datasets.

\textbf{Parameter-free and learned weights.}
Table~\ref{tab:learned} shows similar step coverage for the two weight
constructions. In every displayed row, the parameter-free rule has the smaller
mean set and requires no fitted weight model. This evidence supports using the
parameter-free rule as the main ENCP variant in this paper. Table~\ref{tab:learned} compares one feature set, training target, and calibration
split, so it does not rule out a more efficient learned rule under another
design.

\textbf{Coverage and efficiency.}
Figure~\ref{fig:theorem-check} evaluates the stricter event that every ground-truth
action in an episode is covered, whereas Table~\ref{tab:encp} reports the
average fraction of covered steps and the corresponding mean set size. HAMT--REVERIE in Figure~\ref{fig:theorem-check} is the only condition above the trajectory target when calibrating on validation-seen episodes and evaluating on validation-unseen episodes, although this point estimate carries no coverage guarantee because these episode sets are not exchangeable. Taking episode maximum protects the stricter event, but one difficult step can set the calibration score for the whole episode. The set sizes in
Table~\ref{tab:encp} are therefore part of the result rather than a secondary
diagnostic.

\textbf{Closed-loop interpretation.}
The THR and APS curves nearly coincide within each risk level, which indicates
that the set-size trigger is not sensitive to the choice between these two
scores in this simulation. The risk level has a larger operational effect.
Raising $\alpha$ makes more sets singletons, which can improve the success rate
at a comparable ask rate but also lowers the maximum attainable ask rate. Thus,
$\tau$ cannot be read as a fixed operator budget, even for one policy.

\textbf{Limitations.}
The closed-loop study uses a simulated ground-truth assistant with access to the goal and complete navigation graph; no human operators were evaluated. Because the set-size threshold $\tau$ was swept on val-unseen, the resulting curve does not measure performance at a held-out deployment setting. Moreover, the same value of $\tau$ may produce different ask rates across VLN policies. ENCP also assumes a finite action space. Extending ENCP to continuous VLN~\cite{krantz2020vlnce} would require conformal scores for waypoints or continuous controls, together with a separate evaluation of human assistance.

\section{Conclusion}
\label{sec:conclusion}

Step-pooled conformal prediction ignores within-trajectory dependence in VLN and can collapse to the argmax policy. ENCP instead calibrates one maximum normalized score per episode, providing route-level coverage of the teacher action with probability at least \(1-\alpha\) under exchangeable episodes. Across R2R and REVERIE, ENCP responds as expected to \(\alpha\) and meets the reported step-coverage targets. Learned weighting redistributes uncertainty but yields larger prediction sets, while parameter-free weighting provides a better coverage–set-size trade-off. Set size also serves as a practical help-seeking signal, with tighter budgets increasing oracle queries and improving success. For deployment, size budgets should be selected on held-out data, recalibrated under distribution shift, and validated with human operators.

\bibliographystyle{IEEEtran}
\bibliography{references}

@inproceedings{anderson2018r2r, author = {Anderson, Peter and Wu, Qi and Teney, Damien and Bruce, Jake and Johnson, Mark and S\"underhauf, Niko and Reid, Ian and Gould, Stephen and van den Hengel, Anton}, title = {Vision-and-Language Navigation: Interpreting Visually-Grounded Navigation Instructions in Real Environments}, booktitle = {Proc. CVPR}, pages = {3674--3683}, year = {2018}, doi = {10.1109/CVPR.2018.00387}, note = {arXiv:1711.07280}}

@inproceedings{chang2017matterport, author = {Chang, Angel and Dai, Angela and Funkhouser, Thomas and Halber, Maciej and Nie{\ss}ner, Matthias and Savva, Manolis and Song, Shuran and Zeng, Andy and Zhang, Yinda}, title = {{Matterport3D}: Learning from {RGB-D} Data in Indoor Environments}, booktitle = {Proc. 3DV}, pages = {667--676}, year = {2017}, doi = {10.1109/3DV.2017.00081}, note = {arXiv:1709.06158}}

@inproceedings{ma2019selfmonitoring, author = {Ma, Chih-Yao and Lu, Jiasen and Wu, Zuxuan and AlRegib, Ghassan and Kira, Zsolt and Socher, Richard and Xiong, Caiming}, title = {Self-Monitoring Navigation Agent via Auxiliary Progress Estimation}, booktitle = {Proc. ICLR}, year = {2019}, note = {arXiv:1901.03035}}

@inproceedings{tan2019envdrop, author = {Tan, Hao and Yu, Licheng and Bansal, Mohit}, title = {Learning to Navigate Unseen Environments: Back Translation with Environmental Dropout}, booktitle = {Proc. NAACL}, pages = {2610--2621}, year = {2019}, doi = {10.18653/v1/N19-1268}, note = {arXiv:1904.04195}}

@inproceedings{hong2021vlnbert, author = {Hong, Yicong and Wu, Qi and Qi, Yuankai and Rodriguez-Opazo, Cristian and Gould, Stephen}, title = {{VLN BERT}: A Recurrent Vision-and-Language {BERT} for Navigation}, booktitle = {Proc. CVPR}, pages = {1643--1653}, year = {2021}, doi = {10.1109/CVPR46437.2021.00169}}

@inproceedings{chen2021hamt, author = {Chen, Shizhe and Guhur, Pierre-Louis and Schmid, Cordelia and Laptev, Ivan}, title = {History Aware Multimodal Transformer for Vision-and-Language Navigation}, booktitle = {Proc. NeurIPS}, pages = {5834--5847}, year = {2021}, note = {arXiv:2110.13309}}

@inproceedings{chen2022duet, author = {Chen, Shizhe and Guhur, Pierre-Louis and Tapaswi, Makarand and Schmid, Cordelia and Laptev, Ivan}, title = {Think Global, Act Local: Dual-Scale Graph Transformer for Vision-and-Language Navigation}, booktitle = {Proc. CVPR}, pages = {16537--16547}, year = {2022}, note = {arXiv:2202.11742}}

@inproceedings{krantz2020vlnce, author = {Krantz, Jacob and Wijmans, Erik and Majumdar, Arjun and Batra, Dhruv and Lee, Stefan}, title = {Beyond the Nav-Graph: Vision-and-Language Navigation in Continuous Environments}, booktitle = {Proc. ECCV}, pages = {104--120}, year = {2020}, note = {arXiv:2004.02857}}

@article{zhang2024survey, author = {Zhang, Yue and Ma, Ziqiao and Li, Jialu and Qiao, Yanyuan and Wang, Zun and Chai, Joyce and Wu, Qi and Bansal, Mohit and Kordjamshidi, Parisa}, title = {Vision-and-Language Navigation Today and Tomorrow: A Survey in the Era of Foundation Models}, journal = {Trans. Mach. Learn. Res.}, year = {2024}, note = {arXiv:2407.07035}}

@inproceedings{ross2011reduction, author = {Ross, St\'ephane and Gordon, Geoffrey and Bagnell, Drew}, title = {A Reduction of Imitation Learning and Structured Prediction to No-Regret Online Learning}, booktitle = {Proc. AISTATS}, volume = {15}, pages = {627--635}, year = {2011}}

@article{amodei2016concrete, author = {Amodei, Dario and Olah, Chris and Steinhardt, Jacob and Christiano, Paul and Schulman, John and Man\'e, Dan}, title = {Concrete Problems in {AI} Safety}, journal = {arXiv:1606.06565}, year = {2016}}

@article{hendrycks2021unsolved, author = {Hendrycks, Dan and Carlini, Nicholas and Schulman, John and Steinhardt, Jacob}, title = {Unsolved Problems in {ML} Safety}, journal = {arXiv:2109.13916}, year = {2021}}

@article{lee2004trust, author = {Lee, John D. and See, Katrina A.}, title = {Trust in Automation: Designing for Appropriate Reliance}, journal = {Human Factors}, volume = {46}, number = {1}, pages = {50--80}, year = {2004}, doi = {10.1518/hfes.46.1.50.30392}}

@article{parasuraman1997humans, author = {Parasuraman, Raja and Riley, Victor}, title = {Humans and Automation: Use, Misuse, Disuse, Abuse}, journal = {Human Factors}, volume = {39}, number = {2}, pages = {230--253}, year = {1997}, doi = {10.1518/001872097778543886}}

@inproceedings{ovadia2019trust, author = {Ovadia, Yaniv and Fertig, Emily and Ren, Jie and Nado, Zachary and Sculley, D. and Nowozin, Sebastian and Dillon, Joshua V. and Lakshminarayanan, Balaji and Snoek, Jasper}, title = {Can You Trust Your Model's Uncertainty? Evaluating Predictive Uncertainty Under Dataset Shift}, booktitle = {Proc. NeurIPS}, year = {2019}, note = {arXiv:1906.02530}}

@inproceedings{guo2017calibration, author = {Guo, Chuan and Pleiss, Geoff and Sun, Yu and Weinberger, Kilian Q.}, title = {On Calibration of Modern Neural Networks}, booktitle = {Proc. ICML}, volume = {70}, pages = {1321--1330}, year = {2017}, note = {arXiv:1706.04599}}

@inproceedings{gal2016dropout, author = {Gal, Yarin and Ghahramani, Zoubin}, title = {Dropout as a Bayesian Approximation: Representing Model Uncertainty in Deep Learning}, booktitle = {Proc. ICML}, volume = {48}, pages = {1050--1059}, year = {2016}, note = {arXiv:1506.02142}}

@inproceedings{lakshminarayanan2017ensembles, author = {Lakshminarayanan, Balaji and Pritzel, Alexander and Blundell, Charles}, title = {Simple and Scalable Predictive Uncertainty Estimation using Deep Ensembles}, booktitle = {Proc. NeurIPS}, year = {2017}, note = {arXiv:1612.01474}}

@inproceedings{geifman2017selective, author = {Geifman, Yonatan and El-Yaniv, Ran}, title = {Selective Classification for Deep Neural Networks}, booktitle = {Proc. NeurIPS}, year = {2017}, note = {arXiv:1705.08500}}

@book{thrun2005probabilistic, author = {Thrun, Sebastian and Burgard, Wolfram and Fox, Dieter}, title = {Probabilistic Robotics}, publisher = {MIT Press}, year = {2005}}

@book{vovk2005alrw, author = {Vovk, Vladimir and Gammerman, Alexander and Shafer, Glenn}, title = {Algorithmic Learning in a Random World}, publisher = {Springer}, year = {2005}}

@article{lei2018distributionfree, author = {Lei, Jing and G'Sell, Max and Rinaldo, Alessandro and Tibshirani, Ryan J. and Wasserman, Larry}, title = {Distribution-Free Predictive Inference for Regression}, journal = {J. Amer. Statist. Assoc.}, volume = {113}, number = {523}, pages = {1094--1111}, year = {2018}, doi = {10.1080/01621459.2017.1307116}}

@article{angelopoulos2023gentle, author = {Angelopoulos, Anastasios N. and Bates, Stephen}, title = {Conformal Prediction: A Gentle Introduction}, journal = {Found. Trends Mach. Learn.}, volume = {16}, number = {4}, pages = {494--591}, year = {2023}, doi = {10.1561/2200000101}}

@article{sadinle2019least, author = {Sadinle, Mauricio and Lei, Jing and Wasserman, Larry}, title = {Least Ambiguous Set-Valued Classifiers with Bounded Error Levels}, journal = {J. Amer. Statist. Assoc.}, volume = {114}, number = {525}, pages = {223--234}, year = {2019}, doi = {10.1080/01621459.2017.1395341}}

@inproceedings{romano2020aps, author = {Romano, Yaniv and Sesia, Matteo and Cand\`es, Emmanuel J.}, title = {Classification with Valid and Adaptive Coverage}, booktitle = {Proc. NeurIPS}, year = {2020}, note = {arXiv:2006.02544}}

@inproceedings{angelopoulos2021raps, author = {Angelopoulos, Anastasios N. and Bates, Stephen and Malik, Jitendra and Jordan, Michael I.}, title = {Uncertainty Sets for Image Classifiers Using Conformal Prediction}, booktitle = {Proc. ICLR}, year = {2021}, note = {arXiv:2009.14193}}

@inproceedings{tibshirani2019covariateshift, author = {Tibshirani, Ryan J. and Barber, Rina Foygel and Cand\`es, Emmanuel J. and Ramdas, Aaditya}, title = {Conformal Prediction Under Covariate Shift}, booktitle = {Proc. NeurIPS}, year = {2019}, note = {arXiv:1904.06019}}

@article{barber2023beyond, author = {Barber, Rina Foygel and Cand\`es, Emmanuel J. and Ramdas, Aaditya and Tibshirani, Ryan J.}, title = {Conformal Prediction Beyond Exchangeability}, journal = {Ann. Statist.}, volume = {51}, number = {2}, pages = {816--845}, year = {2023}, doi = {10.1214/23-AOS2276}}

@inproceedings{ren2023knowno, author = {Ren, Allen Z. and Dixit, Anushri and Bodrova, Alexandra and Singh, Sumeet and Tu, Stephen and Brown, Noah and Xu, Peng and Takayama, Leila and Xia, Fei and Varley, Jake and Xu, Zhenjia and Sadigh, Dorsa and Zeng, Andy and Majumdar, Anirudha}, title = {Robots That Ask For Help: Uncertainty Alignment for Large Language Model Planners}, booktitle = {Proc. CoRL}, volume = {229}, pages = {661--682}, year = {2023}, note = {arXiv:2307.01928}}

@article{lindemann2023conformal, author = {Lindemann, Lars and Cleaveland, Matthew and Shim, Gihyun and Pappas, George J.}, title = {Safe Planning in Dynamic Environments Using Conformal Prediction}, journal = {IEEE Robot. Autom. Lett.}, volume = {8}, number = {8}, pages = {5116--5123}, year = {2023}, doi = {10.1109/LRA.2023.3292071}, note = {arXiv:2210.10254}}

@inproceedings{dixit2023multiagent, author = {Dixit, Anushri and Lindemann, Lars and Wei, Skylar X. and Cleaveland, Matthew and Pappas, George J. and Burdick, Joel W.}, title = {Adaptive Conformal Prediction for Motion Planning among Dynamic Agents}, booktitle = {Proc. L4DC}, volume = {211}, pages = {300--314}, year = {2023}, note = {arXiv:2212.00278}}

@inproceedings{liang2024introspective, author = {Liang, Kaiqu and Zhang, Zixu and Fisac, Jaime Fern\'andez}, title = {Introspective Planning: Aligning Robots' Uncertainty with Inherent Task Ambiguity}, booktitle = {Proc. NeurIPS}, year = {2024}, note = {arXiv:2402.06529}}

@inproceedings{abraham2025ask, author = {Abraham, Savitha Sam and Garg, Sourav and Dayoub, Feras}, title = {To Ask or Not to Ask? Detecting Absence of Information in Vision and Language Navigation}, booktitle = {Proc. WACV}, pages = {7480--7489}, year = {2025}, doi = {10.1109/WACV61041.2025.00727}, note = {arXiv:2411.05831}}

@inproceedings{qi2020reverie, title={{REVERIE}: Remote Embodied Visual Referring Expression in Real Indoor Environments}, author={Qi, Yuankai and Wu, Qi and Anderson, Peter and Wang, Xin and Wang, William Yang and Shen, Chunhua and van den Hengel, Anton}, booktitle={Proc. CVPR}, pages = {9982--9991}, year={2020}, doi = {10.1109/CVPR42600.2020.01000}}

@inproceedings{stankeviciute2021cfrnn, author = {Stankevi{\v c}i{\=u}t{\.e}, Kamil{\.e} and Alaa, Ahmed M. and van der Schaar, Mihaela}, title = {Conformal Time-series Forecasting}, booktitle = {Proc. NeurIPS}, volume = {34}, pages = {6216--6228}, year = {2021}}

\end{document}